\title{On Information Gain and Regret Bounds in Gaussian Process Bandits}
\author{%
  Sattar Vakili,\\
  MediaTek Research, UK \\
  \texttt{sattar.vakili@mtkresearch.com}
  \And
  Kia Khezeli\\
  Mayo Clinic, MN, USA\\
  \texttt{khezeli.kia@mayo.edu}
   \And
   Victor Picheny \\
  Secondmind, UK \\
  victor@secondmind.ai
}
\providecommand{\keywords}[1]
{
  \small	
  \textbf{\textit{Keywords---}} #1
}
\def\tr{\text{tr}}
\def\df{\mathcal{D}_T}
\def\W{\mathbf{W}}
\def\X{\mathcal{X}}
\def\H{\mathcal{H}}
\def\O{\mathcal{O}}
\def\phib{\bm{\phi}}
\def\Ib{\bm{I}}
\def\argmax{\footnotesize \mbox{argmax}}
\def\TP{\top}
\def\phib{\bm{\phi}}
\def\Phib{\bm{\Phi}}
\def\Xb{\mathbf{X}}
\def\yb{\mathbf{y}}
\def\Ib{\mathbf{I}}
\def\Rr{\mathbb{R}}
\def\Nn{\mathbb{N}}
\def\E{\mathbb{E}}
\newtheorem{lemma}{Lemma}
\newtheorem{theorem}{Theorem}
\newtheorem{corollary}{Corollary}
\newtheorem{definition}{Definition}
\newtheorem{remark}{Remark}
\newtheorem{assumption}{Assumption}
\def\nn{\nonumber}
\begin{document}
\maketitle

\begin{abstract}

    Consider the sequential optimization of an expensive to evaluate and possibly non-convex objective function $f$ from noisy feedback, that can be regarded as a continuum-armed bandit problem. Upper bounds 
    on the regret performance of several learning algorithms (GP-UCB, GP-TS, and their variants) are known under both a Bayesian (when $f$ is a sample from a Gaussian process (GP)) and a frequentist (when $f$ lives in a reproducing kernel Hilbert space) setting. The regret bounds often rely on the maximal information gain $\gamma_T$ between $T$ observations and the underlying GP (surrogate) model. 
    We provide general bounds on $\gamma_T$ based on the decay rate of the eigenvalues of the GP kernel, whose specialisation for commonly used kernels improves the existing bounds on $\gamma_T$, and subsequently the regret bounds relying on $\gamma_T$ under numerous settings.  
    For the Mat{\'e}rn family of kernels, where the lower bounds on $\gamma_T$, and regret under the frequentist setting, are known, our results close a huge polynomial in $T$ gap between the upper and lower bounds (up to logarithmic in $T$ factors).

\end{abstract}
{\keywords{Information gain, effective dimension, regret bounds, Bayesian optimization, GP-UCB, GP-TS, continuum-armed bandits.}}
\section{Introduction}

Bayesian optimization building on Gaussian Process (GP) models has been shown to efficiently address the exploration-exploitation trade-off in the sequential optimization of non-convex objective functions with bandit feedback. 
There have been significant recent advances in the analysis of GP-based Bayesian optimization algorithms, providing performance guarantees in terms of regret.
Regret is defined as the cumulative loss in the value of the objective function $f$ at a sequence of observation points $\{x_t\}_{t=1}^T$, $T\in \Nn$, in comparison to its value at a global maximum $x^*\in\text{argmax}_{x\in\X} {f(x)}$ over the search space $\X\in \Rr^d$  (see~\eqref{eq:regdef}).
In their seminal paper, ~\cite{srinivas2010gaussian} established performance guarantees for GP-UCB, an \emph{optimistic} optimization algorithm which sequentially selects the $x_t$ that maximize an {u}pper confidence bound score over the search space. 
They considered a fully Bayesian setting where $f$ is assumed to be a sample from a GP with a known kernel, as well as a frequentist setting (referred to as agnostic in~\cite{srinivas2010gaussian}) where $f$ is assumed to live in a reproducing kernel Hilbert space (RKHS) with a known kernel.
They showed an $\tilde{\O}(\sqrt{\gamma_T T})$\footnote{We use the notations $\O$ and $\Omega$ to denote the standard mathematical orders and the notation $\tilde{\O}$ to suppress the logarithmic factors.} and an $\tilde{\O}(\gamma_T\sqrt{T})$
regret bound for GP-UCB under the Bayesian and frequentist settings, respectively, where $\gamma_T$ is the {maximal information gain} between the observed sequence and the underlying model~(see~\cref{InfoGainRegBounds}).
Sublinearly scaling with $T$, $\gamma_T$ depends on the underlying kernel and is interpreted as a measure for the difficulty of the optimization task. Since the pioneering work of ~\citeauthor{srinivas2010gaussian}, there have been several results on improving the bounds toward their optimal value. 
\cite{Chowdhury2017bandit} improved the regret bounds under the frequentist setting by multiplicative logarithmic in $T$ factors. 
A variant of GP-UCB (called SupKernelUCB), which builds on episodic independent batches of observations, was shown in~\cite{Valko2013kernelbandit} to achieve the $\tilde{\O}(\sqrt{\gamma_T T})$ regret bound under the frequentist setting.
Furthermore,~\cite{Chowdhury2017bandit} showed that $\tilde{O}(\gamma_T\sqrt{T})$ regret bounds, under the frequentist setting, also hold for GP-TS, a Bayesian optimization algorithm based on Thompson Sampling which sequentially draws $x_t$ from the posterior distribution of $x^*$. Under the Bayesian setting,~\cite{kandasamy2018parallelised} built on ideas from~\cite{Russo2014, Russo2016Info} to show that GP-TS achieves the same order of regret as GP-UCB.


The regret bounds mentioned above become complete only when $\gamma_T$ is properly bounded, which proves challenging. We first overview the existing upper and lower bounds on the maximal information gain in the literature. We then discuss our contribution and compare it against the state of the art.

\subsection{Upper and Lower Bounds on $\gamma_T$}

\cite{srinivas2010gaussian} showed that $\gamma_T= \tilde{\O}(T^{\frac{d(d+1)}{2\nu+d(d+1)}})$  
for the Mat{\'e}rn-$\nu$ kernel (a Mat{\'e}rn kernel with smoothness parameter $\nu$; see~\cref{Sec:GPs} for the details), and $\gamma_T = \O(\log^{d+1}(T))$ for the Squared Exponential (SE) kernel. Recently,
\cite{Janz2020SlightImprov} introduced a GP-UCB based algorithm (specific to the Mat{\'e}rn family of kernels), that constructs a cover for the search space (as many hypercubes) and fits an independent GP to each cover element. This analysis elicited an improved bound for the Mat{\'e}rn-$\nu$ kernel; $\gamma_T= \tilde{\O}(T^{\frac{d(d+1)}{2\nu+d(d+\bm{2})}})$. 
Plugging these bounds on $\gamma_T$ into the $\tilde{\O}(\sqrt{\gamma_T T})$ regret bounds mentioned above yields explicit upper bounds, in terms of $T$, which are in the forms of $\tilde{\O}(T^\frac{\nu+d(d+1.5)}{2\nu+d(d+2)})$ and $\tilde{\O}(T^{\frac{1}{2}}\log^{\frac{d}{2}+1}(T))$, with Mat{\'e}rn-$\nu$ and SE, respectively.



Finding the order optimal regret bounds is a long standing open question. 
Under the frequentist setting,~\cite{Scarlett2017Lower} proved the $\Omega(T^{\frac{\nu+d}{2\nu+d}})$ and $\Omega(T^{\frac{1}{2}}\log^{\frac{d}{2}}(T))$ lower bounds on the regret performance of any learning algorithm, with Mat{\'e}rn-$\nu$ and SE, respectively. From the results of~\cite{Scarlett2017Lower} and~\cite{Valko2013kernelbandit}, $\Omega(T^{\frac{d}{2\nu+d}})$ and $\Omega(\log^{d}(T))$ lower bounds on $\gamma_T$ can be concluded for Mat{\'e}rn-$\nu$ and SE, respectively~\citep[see also][]{Janz2020SlightImprov}, which facilitate the assessment of the upper bounds. 

While the bounds are tight up to logarithmic factors for SE,
a comparison between the lower and upper bounds for the practically useful Mat{\'e}rn family of kernels
shows a drastic gap, which can surprisingly be as large as $\O(\sqrt{T})$ in the case of regret, and as large as $\O(T)$ in the case of $\gamma_T$, with particular configurations of parameters $\nu$ and $d$.\footnote{Consider a case where $\nu$ and $d$ grow large; $\nu$ grows faster than $d$ and slower than $d^2$. Then, the lower bound and the upper bound on regret become arbitrarily close to $\Omega(\sqrt{T})$ and $\O(T)$, respectively. Therefore, the worst case gap between them is in $\O(\sqrt{T})$. The lower bound and the upper bound on $\gamma_T$ become arbitrarily close to $\Omega(1)$ and $\O(T)$, respectively. Therefore, the worst case gap between them is in $\O({T})$.} Motivated by this huge gap in the literature, we aim to provide tight bounds on $\gamma_T$ (reducing these polynomial in $T$ gaps to logarithmic ones), as outlined in the next section.
\subsection{Contribution}

Our contribution is in establishing novel bounds on $\gamma_T$, which directly translate to new regret bounds for Bayesian optimization algorithms. 
To achieve this, we use Mercer's theorem to represent the GP kernel in terms of its eigenvalue-eigenfeature decomposition ---an inner product in the corresponding reproducing kernel Hilbert space (RKHS)--- which is infinite dimensional for typical kernels. To overcome the difficulty of working in infinite dimensional spaces, we use a projection on a finite $D$ dimensional space that allows us to bound the information gain in terms of $D$ and the spectral properties of the GP kernel. 
For a kernel with decreasing eigenvalues $\{\lambda_m\}_{m=1}^\infty$, we consider two cases of polynomial, $\lambda_m=\O(m^{-\beta_p}) $, $\beta_p>1$, and exponential, $\lambda_m=\O(\exp(-m^{\beta_e})) $, $\beta_e>0$, decays. We prove $\O\left(T^{\frac{1}{\beta_p}}\log^{1-\frac{1}{\beta_p}}(T)\right)$ and $\O\left(\log^{1+\frac{1}{\beta_e}}(T)\right)$ upper bounds on $\gamma_T$ under these two cases, respectively. The application of our bounds on $\gamma_T$ to the regret bounds results in new upper bounds based on $\beta_p$ and $\beta_e$ which are summarized in Table~\ref{TablMatSE}.
In comparison to the existing works, which rely on specific kernels (e.g. Mat{\'e}rn-$\nu$ and SE) for explicit regret bounds, our results provide general explicit regret bounds, providing the conditions on the decay rate of the eigenvalues of the GP kernel (referred to as eigendecay for brevity) are satisfied.

 
 
 

As an instance of polynomially decaying eigenvalues, our results apply to the Mat{\'e}rn-$\nu$ kernel (see~\cref{Sec:GPs}) showing $\tilde{\O}(T^{\frac{d}{2\nu+d}})$ and $\tilde{\O}(T^{\frac{\nu+d}{2\nu+d}})$ bounds on $\gamma_T$ and regret, respectively. Our bounds on $\gamma_T$ and regret (under the frequentist setting) are tight, closing the gap with the respective $\Omega(T^{\frac{d}{2\nu+d}})$ and ${\Omega}(T^{\frac{\nu+d}{2\nu+d}})$ lower bounds reported in~\cite{Scarlett2017Lower,Janz2020SlightImprov}, both up to logarithmic factors. 
As an instance of exponentially decaying eigenvalues, our results apply to the Squared Exponential (SE) kernel. A summary of the results is given in Table~\ref{TablMatSE}.

\begin{table}[ht]
\centering
\begin{adjustbox}{width = \textwidth}
\begin{tabular}{ c c c c c c c }  
 \Xhline{2\arrayrulewidth}
 {\small Kernel}&& {\small Bound on $\gamma_T$}&& {\small Regret Lower Bound}&& {\small Regret Upper Bound $(\tilde{\O}(\sqrt{\gamma_T T}))$} \\
 \cline{1-1}
 \cline{3-3}
 \cline{5-5}
 \cline{7-7}
 &&&&&&\\
 Polynomial eigendecay  && {\small$\O\left(T^{\frac{1}{\beta_p}}\log^{1-\frac{1}{\beta_p}}(T)\right)$} 
  && -&&{\small$\tilde{\O}\left(T^{\frac{\beta_p+1}{2\beta_p}}\right)$}  
  \\ 
Exponential eigendecay&&{\small$\O\left(\log^{1+\frac{1}{\beta_e}}(T)\right)$}&&-&&{\small$\tilde{\O}\left(T^{\frac{1}{2}}\log^{\frac{1}{2\beta_e}}(T)\right)$} \\
{\small Mat{\'e}rn$-\nu$}&&{\small$
 {\O}\left(
T^{\frac{d}{2\nu+d}}\log^{\frac{2\nu}{2\nu+d}}(T)\right)$}&&{\small$\Omega(T^{\frac{\nu+d}{2\nu+d}})$}&& {\small$\tilde{\O}\left(
T^{\frac{\nu+d}{2\nu+d}}
\right)$}\\

 {\small SE}  &&  {\small$\O\left( \log^{d+1}(T) \right)$}&&  {\small$\Omega\left(T^{\frac{1}{2}}\log^{\frac{d}{2}}(T)\right)$}&&  {\small$\tilde{\O}\left(T^{\frac{1}{2}}\log^{\frac{d}{2}}(T)\right)$} \\ 

 &&&&&&\\
\Xhline{2\arrayrulewidth}
\end{tabular}
\end{adjustbox}
\vspace{1em}
\caption{{\small 
The Upper bounds on the maximal information gain $\gamma_T$ and the regret of Bayesian optimization algorithms under general polynomial and exponential conditions on the eigendecay of the GP kernel (see Definition~\ref{Def:PolExp}), as well as, with Mat{\'e}rn-$\nu$ and SE kernels (established in this paper). The lower bounds on regret under the frequentist setting (on the third column of the table) were reported in~\cite{Scarlett2017Lower}. 
The gap between the upper and lower bounds is reduced to logarithmic factors. 
}}\label{TablMatSE}
\end{table}

While we focus on the standard sequential optimization problem in this paper, it is worth noting that the bounds on $\gamma_T$ are also essential for numerous variants of the problem such as the ones under the settings with contextual information, safety constraints and multi-fidelity evaluations (see~\cref{RelatedW} for a list of references). Our bounds on $\gamma_T$ directly apply and improve the regret bounds depending on $\gamma_T$ under these various settings.

The interest in the bounds on $\gamma_T$ goes beyond the regret bounds. 
For example, the confidence bounds for the RKHS elements~\citep[see e.g.,][Theorem 2]{Chowdhury2017bandit} depend on $\gamma_T$. Another closely related quantity is the so called 
effective dimension $\tilde{\mathcal{D}}_T$ of the problem that satisfies $\tilde{\mathcal{D}}_T =\O(\gamma_T)$~\citep[see][ and Remark~\ref{remark1}]{Valko2013kernelbandit, Calandriello2019Adaptive, Janz2020SlightImprov}. 
\cite{Calandriello2019Adaptive} introduced a variation of GP-UCB which improves its computational cost. The improved computational cost depends on $\tilde{\mathcal{D}}_T$.
Our bounds on $\gamma_T$ (consequently on $\tilde{\mathcal{D}}_T$) improve such bounds on the algorithmic properties of GP-based methods.

\subsection{Other Related Work}\label{RelatedW}


Recent years have shown an increasing interest in Bayesian optimization based on GP models. Performance guarantees in terms of regret are studied under various settings including contextual information~\citep{Krause11Contexual}, high dimensional spaces~\citep{Josip13HighD, Mutny2018SGPTS}, safety constraints~\citep{berkenkamp2016bayesiansafe, sui2018stagewisesafe}, parallelization~\citep{kandasamy2018parallelised}, 
multi-fidelity evaluations~\citep{kandasamy2019multifidelity}, ordinal models~\citep{picheny2019ordinal}, online control~\citep{NEURIPS2020_aee5620f}, and corruption tolerance~\citep{bogunovic2020corruption}, to name a few.
\cite{Javidi} introduced an adaptive discretization of the search space improving the computational complexity of a GP-UCB based algorithm. Sparse approximation of GP posteriors are shown to preserve the regret orders while significantly improving the computational complexity of both GP-UCB~\citep{Mutny2018SGPTS, Calandriello2019Adaptive} and GP-TS~\citep{Vakili2020Scalable}. Most of existing work reports regret bounds in terms of $\gamma_T$. Our results directly apply to, and improve, the regret bounds in
all of the works mentioned above, should our bounds on $\gamma_T$ replace the existing ones. 

Our analytical approach and conditions on the eigendecay of GP kernels bear similarity to~\cite{Bartlett2018}, where the authors studied the problem of online learning with kernel losses. The problems and their analysis, however, hold substantial differences. A more challenging adversarial setting was considered for the objective function in~\cite{Bartlett2018}. However, the objective function was restricted to the subspace of one dimensional functions in the RKHS, which is very limiting for our purposes (one of the main challenges in our analysis is the infinite-dimensionality of the RKHS). The algorithmic designs, based on exponential weights, under the adversarial setting, are also significantly different from GP-UCB and GP-TS, especially, in the sense that their analysis does not rely on the information gain.

Instead of Mercer's Theorem, other decompositions of GP kernels may also be used in a similar way to our analysis. For instance, decompositions based on Fourier features were used in~\cite{Mutny2018SGPTS} to implement computationally efficient variations of GP-TS and GP-UCB. They did not however consider the analysis of $\gamma_T$.

Under the Bayesian setting,~\cite{scarlett2018tight} proved tight $\Omega(\sqrt{T})$ (up to a logarithmic in $T$ factor) lower bounds on regret when the search space is one-dimensional ($d=1$). To the best of our knowledge, lower bounds are unknown for the general case ($d>1$), under the Bayesian setting. 

Both GP-UCB and GP-TS are rooted in the classic multi-armed bandit literature \citep[see][and references therein]{Aeur2002UCB, Russo2016Info,slivkins2019introduction,zhao2019book}. Our work strengthens the link between linear~\citep{Dani2008,rusmevichientong2010linearly,Abbasi2011,agrawal2013thompson, Abeille2017LinTS} and kernelized (GP-based)~\citep{srinivas2010gaussian,Chowdhury2017bandit} models for sequential optimization with bandit feedback, as we build our analysis based on a finite-dimensional projection that is equivalent to linear bandits.

\vspace{1em}
The remainder of the paper is organized as follows.
The problem formulation, the preliminaries on GPs, GP-UCB, GP-TS, and the background on the connection between the regret bounds and the information gain are presented in~\cref{Sec:PF}. The analysis of the bounds on $\gamma_T$ is provided in~\cref{Sec:Analysis}. The explicit regret bounds (in terms of $T$) for Bayesian optimization algorithms are given in~\cref{Sec:RegBounds}. 
The paper is concluded in~\cref{Sec:Discussion}. 
\section{Problem Formulation and Preliminaries}\label{Sec:PF}
In this section, we provide background information on sequential optimization, GPs, and the connection between the information gain and the regret bounds for Bayesian optimization algorithms.

We use the following notations throughout the paper. 
For a square matrix $M\in \Rr^{n\times n}$, the notations $\det(M)$ and $\tr(M)$ denote the determinant and the trace of $M$, respectively. The notation $M^{\TP}$  is used for the transpose of an arbitrary matrix $M$. For a positive definite matrix $P$, $\log\det(P)$ denotes $\log(\det(P))$. The identity matrix of dimension $n$ is denoted by $\Ib_n$. For a vector $z\in \Rr^n$,
the notation $\|z\|_2$ denotes its $l^2$ norm.

\subsection{The Sequential Optimization Problem}\label{SeqOptP}

Consider the sequential optimization of a fixed and unknown objective function $f$ over a compact set $\X\subset \Rr^d$. A learning algorithm $\pi$ sequentially selects an observation point $x_t\in \X$ at each discrete time instance $t=1,2,\dots$, and receives the corresponding real-valued reward $y_t = f(x_t)+\epsilon_t$, where $\epsilon_t$ is the observation noise. Specifically, $\pi=\{\pi_t\}_{t=1}^{\infty}$ is a sequence of mappings $\pi_t:\H_{t-1}\rightarrow \X$ from the history of observations to a new observation point; 
$\H_t=\{\Xb_t,\yb_t\}$, $\Xb_t = [x_1,x_2,...,x_t]^{\TP}$, $\yb_t = [y_1,y_2,...,y_t]^{\TP}$, $x_s\in\X$, $y_s\in \Rr$, for all $s\ge 1$.
The regularity assumptions on $f$ and $\epsilon_t$ are specified in~\cref{Regularity}.

The goal is to minimize regret, defined as the cumulative loss compared to the maximum attainable objective, over a time horizon $T$. Specifically, 
\begin{eqnarray}\label{eq:regdef}
R(T;\pi) = \sum_{t=1}^T \left(f(x^*) - f(x_t)\right),
\end{eqnarray}
where $x^* {\in}\argmax_{x\in\X}f(x)$ is a global maximum of $f$.
To simplify the notation, 
the dependency on $\pi$ in the notation of $\Xb_t$ has been omitted.

\subsection{Gaussian Processes}\label{Sec:GPs}

The learning algorithms considered here build on GP (surrogate) models. 
A GP is a random process $\{{\hat{f}}(x)\}_{x \in \X}$, in which all finite subsets follow multivariate Gaussian distributions \citep{Rasmussen2006}. The distribution of a GP can be specified by its mean function $\mu(x)=\E[\hat{f}(x)]$ and a positive definite kernel (or covariance function) $k(x,x') = \E\left[(\hat{f}(x)-\mu(x))(\hat{f}(x')-\mu(x'))\right]$. Without loss of generality, it is typically assumed that $\forall x\in\X,\mu(x)=0$ for prior GP distributions.

Conditioning GPs on available observations provides us with powerful non-parametric Bayesian (surrogate) models over the space of functions.
In particular, conditioned on $\H_{t}$, the posterior of $\hat{f}$ is a GP with mean function $\mu_{t}(x)= \E[\hat{f}(x)|\H_{t}] $ and kernel function $k_t(x,x') = \E[(\hat{f}(x)-\mu_t(x))(\hat{f}(x')-\mu_t(x'))|\H_{t}]$ specified as follows:
\begin{eqnarray}\nn
\mu_t(x) &=& k^{\TP}_{\Xb_t,x} (K_{\Xb_t,\Xb_t}+\tau \Ib)^{-1} \yb_{t}, \\\nn
k_{t}(x,x') &=&   k(x,x') -  k^{\TP}_{\Xb_t,x} (K_{\Xb_t,\Xb_t}+\tau \Ib)^{-1} k_{\Xb_t,x'},
\end{eqnarray}

where $k_{\Xb_t,x} = \left[~k(x_1,x),k(x_2,x), \dots, k(x_{t},x)~\right]^{\TP}$ and $K_{\Xb_t,\Xb_t}$ is the ${t}\times{t}$ positive definite covariance matrix, $[k(x_i,x_j)]_{i,j=1}^{t}$. The posterior variance of $\hat{f}(x)$ is denoted by $\sigma^2_t(x) = k_t(x,x)$.  

Mat{\'e}rn and squared exponential (SE) are perhaps the most popular kernels in practice for Bayesian optimization~\citep[see e.g.,][]{Snoek2012practicalBO,Shahriari2016outofloop},
\begin{eqnarray}\nn
\scriptsize
k_{\text{Mat{\'e}rn}}(x,x') &=& \frac{1}{\Gamma(\nu)2^{\nu-1}}\left(\frac{\sqrt{2\nu}r}{l}\right)^{\nu}B_{\nu}\left(\frac{\sqrt{2\nu}r}{l}\right),\\\nn
\scriptsize k_{\text{SE}}(x,x') &=& \exp \left(-\frac{r^2}{2l^2} \right),
\end{eqnarray}
where $l >0$, $r=\|x-x'\|_2$ is the Euclidean distance between $x$ and $x'$,  $\nu>0$ is referred to as the smoothness parameter, $\Gamma$ is the gamma function, and $B_\nu$ is the modified Bessel function of the second kind. Variation over parameter $\nu$ creates a rich family of kernels.
The SE kernel can also be interpreted as a special case of Mat{\'e}rn family when $\nu\rightarrow\infty$. 

\subsection{Bayesian Optimization Algorithms (GP-UCB and GP-TS)}

GP-UCB relies on an \emph{optimistic} upper confidence bound score to select the observation points. Specifically, at each time $t$, $x_t$ is selected as
\begin{eqnarray}\nn
x_t =\text{argmax}_{x\in \X} \mu_{t-1}(x)+\alpha_{t}\sigma_{t-1}(x),
\end{eqnarray}
where $\mu_{t-1}$ and $\sigma_{t-1}$ are the posterior mean and the standard deviation, based on previous observations defined in~\cref{Sec:GPs}, and $\alpha_t$ is a  user-specified scaling parameter. 

GP-TS selects the observation points by posterior sampling. Specifically, at each time $t$, a sample $\hat{f}_t(x)$ is drawn from a GP with mean $\mu_{t-1}$ and kernel function $\alpha_t^2k_{t-1}$ where  $\mu_{t-1}$ and $k_{t-1}$ are the posterior mean and the posterior kernel, based on previous observations defined~in~\cref{Sec:GPs}, and $\alpha_t$ is a user-specified scaling parameter. Then, $x_t$ is selected as
\begin{eqnarray}
x_t = \text{argmax}_{x\in \X}\ \hat{f}_t(x).
\end{eqnarray}

The scaling parameters $\alpha_t$ are designed to balance the trade-off between exploitation and exploration of the search space and increase with $t$ ($\alpha_t>\alpha_{t'}$ when $t>t'$). See, e.g., \cite{srinivas2010gaussian, Chowdhury2017bandit} for the specifications of $\alpha_t$.

\subsection{Regularity Assumptions}\label{Regularity}

The regret performance of the learning algorithms is analysed under two different settings, referred to as Bayesian and frequentist. 

Under the {Bayesian} setting, $f$ is assumed to be a sample from a prior GP with kernel $k$. The observation noise $\{\epsilon_t\}_{t=1}^T$ are assumed to be i.i.d. zero mean Gaussian random variables with variance~$\tau$. 

Under the {frequentist} setting, $f$ is assumed to live in the RKHS corresponding to $k$. In particular,
$\|f\|_{H_k}\le B$, for some $B>0$, where $\|\cdot\|_{H_k}$ denote the RKHS norm~(see~\cref{RKHS} for the definition of the RKHS norm). The observation noise are assumed to be i.i.d. sub-Gaussian random variables. Specifically, it is assumed that $\forall h\in \Rr, \forall t\in \Nn,
\E[e^{h\epsilon_t}]\le \exp(\frac{h^2R^2}{2}),
$ for some $R>0$.
The sub-Gaussian assumption implies that $\E[\epsilon_t] = 0$, for all $t$.

\subsection{The Information Gain and The Upper Bounds on Regret}\label{InfoGainRegBounds}

The regret analysis of Bayesian optimization algorithms typically consists of two main components. One is a bound on the maximal information gain $\gamma_T$, and the other is a confidence bound for random processes. The bound on $\gamma_T$ is treated identically under both Bayesian and frequentist settings. Confidence bounds which are utilized under each setting are different, however. 
To motivate the analysis of $\gamma_T$, we provide a sketch for the analysis of GP-UCB. Since our bounds on $\gamma_T$ are independent of the learning algorithm, they are indiscriminately applicable to all settings where the regret bound is given in terms of $\gamma_T$.


A classic approach to the sequential optimization problem is to construct a $1-\delta$ upper confidence bound for $f$, after $t-1$ observations, in the form of 
\begin{eqnarray}\label{ucb}
U_t(x) = \mu_{t-1}(x) + \beta_t(\delta)\sigma_{t-1}(x).
\end{eqnarray}
GP modelling provides us with closed form expressions for $\mu_{t-1}(x)$ and $\sigma_{t-1}(x)$, while $\beta_t(\delta)$ is a properly chosen confidence width multiplier that ensures~$U_t(x)\ge f(x)$, with probability at least $1-\delta$. Following the standard analysis~\citep[see e.g.,][]{srinivas2010gaussian}, a probability union bound implies that, with probability at least $1-\delta$, for all $t\ge 1$,
\begin{eqnarray}\nn
f(x^*) - f(x_t) &\le& \mu_{t-1}(x^*) + \beta_t(3\delta/(\pi^2t^2))\sigma_{t-1}(x^*)\\\nn
&&\hspace{-1em}- \mu_{t-1}(x_t) + \beta_t(3\delta/(\pi^2t^2))\sigma_{t-1}(x_t).
\end{eqnarray}
The selection rule of GP-UCB with $\alpha_t = \beta_t(3\delta/(\pi^2t^2))$ indicates that $\mu_{t-1}(x^*) + \alpha_t\sigma_{t-1}(x^*)\le \mu_{t-1}(x_t) + \alpha_t\sigma_{t-1}(x_t)$. Therefore, for GP-UCB, with probability at least $1-\delta$, $f(x^*) - f(x_t) \le 2\alpha_t\sigma_{t-1}(x_t)$, for all $t\ge 1$. Summing up both sides over $t$ and applying Cauchy-Schwarz inequality, we get, with probability at least $1-\delta$, 
\begin{eqnarray}\nn
R(T;\text{GP-UCB}) \le 2\alpha_T\sqrt{\df T}, 
\end{eqnarray}
where 
$\df=\sum_{t=1}^T\sigma^2_{t-1}(x_t)$.

\cite{srinivas2010gaussian} showed that, under the Bayesian setting with some mild regularity assumptions which hold for most typical kernels (e.g., SE and Mat{\'e}rn-$\nu$ with $\nu>2$), $\beta_t(\delta) = \O(\sqrt{\log(t/\delta)})$, consequently $\alpha_T=\O(\sqrt{\log(T/\delta)})$. Under the frequentist setting,~\cite{Chowdhury2017bandit} established similar confidence bounds; albeit, with a much larger width multiplier $\beta_t(\delta) = B+R\sqrt{2(\gamma_{t-1}+1+\log(1/\delta))}$ resulting in $\alpha_T = \O(\sqrt{\gamma_T})$, and consequently an $\O(\sqrt{\gamma_T})$ gap between the regret bounds, for GP-UCB, under the Bayesian and frequentist settings. It is unknown whether this confidence bound and the regret bounds for vanilla GP-UCB under the frequentist setting can be improved. The main challenge in establishing confidence bounds is the adaptivity of the observation sequence in the sequential optimization problem (in contrast to an offline setting with predetermined observation points). Of significant theoretical value, the SupKernelUCB algorithm~\citep{Valko2013kernelbandit} gets around this technicality and achieves $\O(\sqrt{\df T}\log^2(T))$ regret through the use of an independent batch observation trick, which can be attributed back to~\cite{auer2002using}. The original analysis of SupKernelUCB, which was given on a finite search space, can be extended to more general compact sets through a discretization argument, preserving $\tilde{O}(\sqrt{\df T})$ regret~\citep[see][Appendix A.4]{cai2020lower}.

It remains to bound the cumulative variance at the observation points, $\df$.
The standard approach for bounding $\df$ is to use the information gain that refers to the mutual information $I(\yb_t;\hat{f})$~\citep{cover1999elements} between $\yb_t$ and $\hat{f}$.
From the closed form expression of mutual information between two multivariate Gaussian distributions, we know that
$
I(\yb_t;\hat{f}) = \frac{1}{2}\log\det(\Ib_t+\frac{1}{\tau}K_{\Xb_t,\Xb_t}).
$
Using Jensen's inequality, \cite{srinivas2010gaussian} proved that
$\df\le c_1 I(\yb_T;\hat{f})$
where $c_1 = 2/\log(1+1/\tau)$ is an absolute constant. 

It is standard to proceed by defining a kernel-specific and $\Xb_T$-independent maximal information gain, 
\begin{eqnarray}\label{maximal}
\gamma_T = \sup_{\Xb_T\subseteq \X}I(\yb_T;\hat{f}).
\end{eqnarray}
The regret bounds are then given in terms of $\gamma_T$. 

For specific kernels (Mat{\'e}rn and SE),~\cite{srinivas2010gaussian,Janz2020SlightImprov} proved upper bounds on $\gamma_T$ which are commonly used to provide explicit regret bounds. The contribution of this paper is to derive novel bounds on $\gamma_T$ (consequently, on $\df$) which immediately translate to improved regret bounds under various settings.

\section{Upper Bounds on the Information Gain}\label{Sec:Analysis}

Our bounds on the information gain are achieved through a finite dimensional projection of the GP model in the RKHS corresponding to $k$. We
start with outlining the details of the RKHS and the finite dimensional projection of the GP model. We then present the bounds on $\gamma_T$.  

\subsection{RKHS and Mercer's Theorem}\label{RKHS}

Consider a positive definite kernel $k:\X\times\X\rightarrow \Rr$ with respect to a finite Borel measure (e.g., the Lebesgue measure) supported on $\X$. A Hilbert space $H_k$ of functions on $\X$ equipped with an inner product $\langle \cdot, \cdot \rangle_{H_k}$ is called an RKHS with reproducing kernel $k$ if the following are satisfied.
For all $x\in\X$, $k(\cdot,x)\in H_k$, and
for all $x\in\X$ and $f\in H_k$, $\langle f,k(\cdot,x)\rangle_{H_k} = f(x)$ (reproducing property).

An RKHS is completely specified with its kernel function and vice-versa. The inner product induces the RKHS norm $\|f\|^2_{H_k} = \langle f,f\rangle_{H_k}$ that can be interpreted as a measure for the complexity of $f$.

Mercer's theorem provides an alternative representation for GP kernels as an inner product of infinite dimensional feature maps~\citep[see e.g.,][Theorem 4.1]{Kanagawa2018}.

\begin{theorem}[Mercer's Theorem]\label{The:Mercer}
Let $k$ be a continuous kernel with respect to a finite Borel measure on $\X$.
There exists $\{(\lambda_m,\phi_m)\}_{m=1}^{\infty}$ such that $\lambda_m\in \Rr^{+}$, $\phi_m\in H_k$, for $m\ge1$, and
\begin{eqnarray}\nn
k(x,x') = \sum_{m=1}^{\infty} \lambda_m\phi_m(x)\phi_m(x').
\end{eqnarray}
\end{theorem}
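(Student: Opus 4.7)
The plan is to apply the spectral theorem for compact self-adjoint positive operators to the integral operator associated with $k$. First, I would introduce the operator $T_k : L^2(\X,\mu) \to L^2(\X,\mu)$ defined by
\begin{eqnarray}\nn
(T_k f)(x) = \int_{\X} k(x,y) f(y)\, d\mu(y),
\end{eqnarray}
where $\mu$ is the prescribed finite Borel measure on $\X$. I would verify that $T_k$ is compact, self-adjoint, and positive. Symmetry of $k$ (implicit in positive definiteness) yields self-adjointness, the quadratic-form definition of positive definiteness gives $\langle T_k f, f \rangle_{L^2} \ge 0$, and continuity of $k$ on a compact $\X$ equipped with a finite $\mu$ makes $k$ square-integrable over $\X\times\X$, so $T_k$ is Hilbert-Schmidt and hence compact.

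Second, I would invoke the spectral theorem to obtain a countable orthonormal basis $\{\phi_m\}$ of $L^2(\X,\mu)$ consisting of eigenfunctions $T_k \phi_m = \lambda_m \phi_m$, with $\lambda_m \ge 0$ and $\lambda_m \to 0$. A key sub-step is to show that each $\phi_m$ with $\lambda_m>0$ admits a continuous representative and lies in $H_k$. This uses the identity $\phi_m = \lambda_m^{-1} T_k \phi_m$: the right-hand side is continuous in $x$ by dominated convergence and continuity of $k$, and belongs to $H_k$ because any $T_k f$ with $f\in L^2$ can be written as a limit in $H_k$ of finite linear combinations of sections $k(\cdot, y)$, using the reproducing property.

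Third, I would establish the pointwise expansion $k(x,x') = \sum_m \lambda_m \phi_m(x)\phi_m(x')$. Defining the remainder $r_N(x,x') = k(x,x') - \sum_{m=1}^N \lambda_m \phi_m(x)\phi_m(x')$, one observes that $r_N$ remains a positive definite kernel (subtracting a finite-rank positive piece preserves positivity), so $r_N(x,x) \ge 0$ pointwise. Expanding $T_k$ in the eigenbasis gives $\int r_N(x,x)\, d\mu(x) = \sum_{m>N} \lambda_m \to 0$. The classical argument then applies Dini's theorem to the monotone increasing sequence of continuous partial sums $\sum_{m\le N}\lambda_m \phi_m(x)^2$ to upgrade this $L^1$-trace convergence to uniform pointwise convergence on the diagonal. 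Finally, the Cauchy--Schwarz-type inequality $|r_N(x,x')|^2 \le r_N(x,x)\, r_N(x',x')$, valid for any positive definite kernel, transports uniform convergence from the diagonal to the full product $\X\times\X$, giving the required expansion.

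The main obstacle I anticipate is the regularity issue: the spectral theorem a priori yields only $L^2$-equivalence classes for the $\phi_m$, and Mercer's conclusion requires genuine pointwise values so that $\phi_m \in H_k$ and the series $\sum_m \lambda_m \phi_m(x)\phi_m(x')$ makes sense pointwise. The device $\phi_m = \lambda_m^{-1} T_k \phi_m$ is what lifts the $L^2$ eigenfunctions to continuous representatives, but one has to consistently use these continuous versions when invoking Dini's theorem and when interpreting the Cauchy--Schwarz bound on $r_N$. A secondary subtlety is that zero eigenvalues (and the corresponding eigenfunctions, which may live outside $H_k$) should simply be dropped from the enumeration, which is legitimate since they contribute nothing to the expansion of $k$.
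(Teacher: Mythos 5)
The paper does not prove this statement; it is quoted as a classical result (Theorem 4.1 of Kanagawa et al., 2018), so there is no in-paper argument to compare against. Your outline is the standard textbook proof of Mercer's theorem and is essentially correct: integral operator, compactness via the Hilbert--Schmidt property, spectral theorem, lifting of eigenfunctions to continuous representatives via $\phi_m=\lambda_m^{-1}T_k\phi_m$, Dini on the diagonal, and the Cauchy--Schwarz bound $|r_N(x,x')|^2\le r_N(x,x)r_N(x',x')$ to pass to uniform convergence off the diagonal. Two small points deserve tightening. First, your justification that $r_N$ is positive definite (``subtracting a finite-rank positive piece preserves positivity'') is not a valid general principle; the correct reason is that the subtracted piece is exactly the rank-$N$ spectral truncation, so the remainder operator is $\sum_{m>N}\lambda_m\,\phi_m\otimes\phi_m\succeq 0$, and one then needs a short continuity argument (integrate $r_N$ against a bump function near a putative point where $r_N(x_0,x_0)<0$) to convert operator positivity into the pointwise inequality $r_N(x,x)\ge 0$. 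Second, the expansion holds on all of $\X\times\X$ only if the measure has full support (otherwise it holds only on $\mathrm{supp}(\mu)\times\mathrm{supp}(\mu)$); the paper's setup, with $\X$ compact and $\mu$ e.g.\ Lebesgue supported on $\X$, supplies both the full support needed here and the compactness needed for Dini's theorem, but you should state these hypotheses explicitly.
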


The $\{\lambda_m\}_{m=1}^\infty$ and the $\{\phi_m\}_{m=1}^\infty$ are referred to as the eigenvalues and the eigenfeatures (or eigenfunctions) of $k$, respectively. Throughout the paper, it is assumed that $\{\lambda_m\}_{m=1}^\infty$ are in a decreasing order: $\lambda_1\ge\lambda_2\ge\dots$. 
Our technical assumption on $k$, used in the analysis of $\gamma_T$, is specified next, that is the same as in ~\cite{Bartlett2018}, and holds for practically relevant kernels~\citep[cf.][]{Gabriel2020practicalfeature}.

\begin{assumption}\label{ass1}
\textbf{a)} $k$ is a Mercer kernel (that is to satisfy the conditions of Mercer's theorem). 
\textbf{b)} $\forall x,x'\in\X$, $|k(x,x')|\le \bar{k}$, for some $\bar{k}>0$. 
\textbf{c)}
$\forall m \in \Nn, \forall x\in\X$, $|\phi_m(x)|\le \psi$, for some $\psi>0$.
\end{assumption}

As a result of Mercer's theorem, we can express a GP sample $\hat{f}$ in terms of a weight vector in the feature space of $k$
\begin{eqnarray}\label{fhat}
\hat{f}(\cdot) = \sum_{m=1}^\infty W_m\lambda^{\frac{1}{2}}_m\phi_m(\cdot),
\end{eqnarray}
where the weights $W_m$ are i.i.d. random variables with standard normal distribution \citep[see e.g.,][Remark 4.4]{Kanagawa2018}.
It is straightforward to check that $\hat{f}$ given in~\eqref{fhat} is a zero mean GP with kernel $k$. We refer to this representation as the feature space representation in contrast to the function space representation presented in~\cref{Sec:GPs}.

The RKHS can also be represented in terms of $\{(\lambda_m,\phi_m)\}_{m=1}^{\infty}$ using Mercer's representation theorem~\citep[see e.g.,][Theorem 4.2]{Kanagawa2018}.

\begin{theorem}[Mercer's Representation Theorem]\label{The:MercerRep} Let $\{(\lambda_m,\phi_m)\}_{m=1}^\infty$ be the same as in Theorem~\ref{The:Mercer}. Then, the RKHS of $k$ is given by
\begin{eqnarray}\nn
\scriptsize
&&\hspace{-2em}H_k = \\\nn
&&\hspace{-1em}\left\{ f(\cdot)=\sum_{m=1}^{\infty}w_m\lambda_{m}^{\frac{1}{2}}\phi_m(\cdot): \|f\|_{H_k} \triangleq \sum_{m=1}^\infty w_m^2<\infty \right\}.
\end{eqnarray}
\end{theorem}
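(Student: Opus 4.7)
The plan is to construct the candidate space on the right-hand side, equip it with the natural inner product making the rescaled eigenfunctions an orthonormal basis, verify that it is a Hilbert space whose reproducing kernel is $k$, and then invoke the uniqueness of the RKHS associated with a positive definite kernel (Moore--Aronszajn) to conclude equality with $H_k$.

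First, I would define
\[
\tilde{H} \;=\; \Bigl\{ f = \sum_{m=1}^{\infty} w_m \lambda_m^{1/2}\phi_m \;:\; \{w_m\}_{m\ge 1}\in \ell^2 \Bigr\}
\]
equipped with the inner product $\langle f,g\rangle_{\tilde{H}} = \sum_{m=1}^\infty w_m u_m$, where $g=\sum_m u_m \lambda_m^{1/2}\phi_m$. Pointwise convergence of the defining series on $\X$ follows by Cauchy--Schwarz and Mercer's theorem together with Assumption~\ref{ass1}(b):
\[
\Bigl|\sum_{m=1}^\infty w_m \lambda_m^{1/2}\phi_m(x)\Bigr|^{2} \le \Bigl(\sum_{m=1}^\infty w_m^2\Bigr)\Bigl(\sum_{m=1}^\infty \lambda_m \phi_m(x)^2\Bigr) = \Bigl(\sum_{m=1}^\infty w_m^2\Bigr)\, k(x,x) \le \bar{k}\sum_{m=1}^\infty w_m^2.
\]
The coefficient map $\{w_m\}\mapsto \sum_m w_m\lambda_m^{1/2}\phi_m$ is then an isometry from $\ell^2$ onto $\tilde{H}$, once uniqueness of coefficients is established, which I would obtain from the $L^2(\mu)$-orthonormality of the eigenfunctions $\{\phi_m\}$ coming from the integral operator formulation of Mercer's theorem. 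Consequently $\tilde{H}$ is complete and $\{\lambda_m^{1/2}\phi_m\}_{m\ge 1}$ is an orthonormal basis of $\tilde{H}$.

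Second, I would verify that $k$ is the reproducing kernel of $\tilde{H}$. By Theorem~\ref{The:Mercer}, for any $x\in\X$,
\[
k(\cdot,x) = \sum_{m=1}^\infty \bigl(\lambda_m^{1/2}\phi_m(x)\bigr)\, \lambda_m^{1/2}\phi_m(\cdot),
\]
so $k(\cdot,x)\in\tilde{H}$ with $\|k(\cdot,x)\|_{\tilde{H}}^2 = \sum_m \lambda_m\phi_m(x)^2 = k(x,x) \le \bar{k}$, and for any $f = \sum_m w_m\lambda_m^{1/2}\phi_m \in \tilde{H}$ the orthonormality of the basis yields
\[
\langle f, k(\cdot,x)\rangle_{\tilde{H}} = \sum_{m=1}^\infty w_m \lambda_m^{1/2}\phi_m(x) = f(x),
\]
which is the reproducing property. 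Since Moore--Aronszajn guarantees a unique Hilbert space of functions on $\X$ with a given positive definite reproducing kernel, and both $H_k$ and $\tilde{H}$ qualify, they coincide isometrically; the squared RKHS norm then reads $\|f\|_{H_k}^2 = \sum_m w_m^2$, as the statement asserts.

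The main obstacle is the well-definedness of the coefficient representation, i.e., showing that the expansion $f = \sum_m w_m\lambda_m^{1/2}\phi_m$ uniquely determines $\{w_m\}$. This requires the $L^2$-orthonormality of the $\phi_m$ that accompanies Mercer's theorem, combined with care to distinguish pointwise equality of functions (needed since RKHS elements are pointwise-defined) from $L^2$-equivalence; the pointwise bound derived above, which is uniform on $\X$ under Assumption~\ref{ass1}, is what reconciles the two and justifies all term-by-term manipulations used in the reproducing property computation.
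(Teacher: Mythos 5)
Your proof is correct. Note that the paper does not prove this statement at all: it is quoted as a known background result, with the proof outsourced to the cited reference (Kanagawa et al.\ 2018, Theorem 4.2), so there is no in-paper argument to compare against. Your argument is the standard one found in that reference and in Steinwart--Christmann: build the candidate space $\tilde H$ isometric to $\ell^2$, check that $k(\cdot,x)\in\tilde H$ with $\|k(\cdot,x)\|_{\tilde H}^2=k(x,x)$ and that the reproducing property holds, then invoke Moore--Aronszajn uniqueness. You also correctly identify and handle the only delicate point, namely that the coefficient map is injective; this does require the $L^2(\mu)$-orthonormality of the $\phi_m$ and the strict positivity $\lambda_m>0$ asserted in Theorem~\ref{The:Mercer} (a pointwise-zero $f$ is $L^2$-zero, so $w_m\lambda_m^{1/2}=\langle f,\phi_m\rangle_{L^2}=0$, whence $w_m=0$). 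One cosmetic remark: the paper's display writes $\|f\|_{H_k}\triangleq\sum_m w_m^2$, which is really the \emph{squared} norm, and your conclusion $\|f\|_{H_k}^2=\sum_m w_m^2$ is the correct reading.
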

Mercer's representation theorem provides an explicit definition for the RKHS norm. It also indicates that $\{\lambda_m^{\frac{1}{2}}\phi_m\}_{m=1}^\infty$ form an orthonormal basis for $H_k$.

\subsection{Projection onto a Finite Dimensional Space}\label{Auxilary}

The feature space representation of typical GP kernels is infinite dimensional. 
To overcome the difficulty of working in infinite dimensional spaces,
we use a projection $\mathcal{P}_D$ on a $D$ dimensional RKHS consisting of the first $D$ features (corresponding to the $D$ largest eigenvalues of the kernel). Specifically,
consider the $D$-dimensional feature space $\phib_D(\cdot) =[\phi_1(\cdot), \phi_2(\cdot),\dots, \phi_D(\cdot)]^{\TP}$, the $D$-dimensional column vector $\W_D = [W_1,W_2,\dots,W_D]^{\TP}$ and the diagonal matrix $\Lambda_D =\text{diag}([\lambda_1, \lambda_2, \dots, \lambda_D])$ with $[\lambda_1, \lambda_2, \dots, \lambda_D]$ as the diagonal entries. The projection of $\hat{f}$ on the $D$-dimensional space is given by
\begin{eqnarray}\nn
\mathcal{P}_D [\hat{f}(\cdot)] &=& \W_D^{\TP}\Lambda_D^{\frac{1}{2}}\phib_D(\cdot).
\end{eqnarray}

Notice that $\mathcal{P}_D [\hat{f}]$ is a zero mean GP with kernel $k_{P}(x,x')=\sum_{m=1}^D\lambda_m\phi_m(x)\phi_m(x')$. We used the subscript $P$ to signify the space resulted from the projection. In addition, let $\mathcal{P}^{\bot}_D [\hat{f}] = \hat{f} - \mathcal{P}_D [\hat{f}]$ be the orthogonal part of $\hat{f}$ with respect to the projection. Notice that $\mathcal{P}^{\bot}_D [\hat{f}]$ is also a GP, with kernel $k_O(x,x') = k(x,x')-k_P(x,x')$. We used the subscript $O$ to signify the orthogonal part.


We define the following quantity based on the tail mass of the eigenvalues of $k$
\begin{eqnarray}\label{eq:deltaM}
\delta_D =\sum_{m=D+1}^\infty\lambda_m\psi^2.
\end{eqnarray}
If $\lambda_m$ diminishes at a sufficiently fast rate (see Defenition~\ref{Def:PolExp}), $\delta_D$ becomes arbitrarily small when $D$ is large enough. For all $x,x'\in~\X$, we then have $ k_O(x,x')\le \delta_D.$

\subsection{Analysis of the Information Gain}\label{Analysisofdf}


Here, we establish a novel upper bound on $\gamma_T$.




\begin{theorem}[Bounding $\gamma_T $]\label{TheoremDT}
Consider a GP with a kernel $k$ satisfying Assumption~\ref{ass1}. For $D\in \Nn$, let $\delta_D$ be as defined in~\eqref{eq:deltaM}. 
The following upper bound on $\gamma_T $, defined in~\eqref{maximal}, holds for  all $D\in\Nn$.
\begin{eqnarray}\nn
\gamma_T \le  \frac{1}{2}D\log\left(1+\frac{\bar{k}T}{\tau D}\right) +  \frac{1}{2}\frac{\delta_DT}{\tau}.
\end{eqnarray}
\end{theorem}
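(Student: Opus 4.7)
The plan is to exploit the projection decomposition $\hat{f} = \mathcal{P}_D[\hat{f}] + \mathcal{P}_D^{\bot}[\hat{f}]$ from \cref{Auxilary}, which induces the additive splitting $K_{\mathbf{X}_T,\mathbf{X}_T} = K_P + K_O$ of the Gram matrix into the Gram matrices of $k_P$ and $k_O$. Starting from the closed form $\gamma_T = \sup_{\mathbf{X}_T\subseteq\mathcal{X}}\tfrac{1}{2}\log\det(\mathbf{I}_T + \tfrac{1}{\tau}K_{\mathbf{X}_T,\mathbf{X}_T})$ recalled in \cref{InfoGainRegBounds}, the strategy is to handle the two summands separately: $K_P$ has rank at most $D$ and will produce the $D\log(\cdot)$ term, while $K_O$ has bounded trace and will produce the $\delta_D T/\tau$ residual.

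The technical backbone is the matrix inequality
\begin{equation*}
\log\det(\mathbf{I}+A+B) \le \log\det(\mathbf{I}+A) + \tr(B) \qquad \text{for positive semi-definite } A,B,
\end{equation*}
which I would derive by rewriting the left-hand side difference as $\log\det(\mathbf{I} + (\mathbf{I}+A)^{-1/2}B(\mathbf{I}+A)^{-1/2})$, applying $\log(1+x)\le x$ to each eigenvalue to get $\tr((\mathbf{I}+A)^{-1}B)$, and then using $(\mathbf{I}+A)^{-1}\preceq \mathbf{I}$ together with monotonicity of $X\mapsto\tr(XB)$ over PSD $X$ for PSD $B$. Applied with $A=\tfrac{1}{\tau}K_P$ and $B=\tfrac{1}{\tau}K_O$, this reduces the analysis to separately bounding $\log\det(\mathbf{I}_T+\tfrac{1}{\tau}K_P)$ and $\tr(K_O)$.

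For the first term, write $K_P = \mathbf{\Phi}_D \Lambda_D \mathbf{\Phi}_D^{\TP}$ with $\mathbf{\Phi}_D\in\mathbb{R}^{T\times D}$ the design matrix of the first $D$ features; the Weinstein--Aronszajn (Sylvester) identity collapses the $T\times T$ log-det to the $D\times D$ log-det $\log\det(\mathbf{I}_D + \tfrac{1}{\tau}\Lambda_D^{1/2}\mathbf{\Phi}_D^{\TP}\mathbf{\Phi}_D\Lambda_D^{1/2})$, and AM--GM on its eigenvalues yields $\log\det(\mathbf{I}_D + M) \le D\log(1+\tr(M)/D)$. Using $\tr(M) = \tr(K_P) = \sum_{t=1}^T k_P(x_t,x_t) \le T\bar{k}$ (the pointwise bound $k_P(x,x)\le k(x,x)\le \bar{k}$ follows from the PSD-ness of $k_O$ and \cref{ass1}) produces the $D\log(1+\bar{k}T/(\tau D))$ term. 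For the second term, the pointwise bound $k_O(x,x)\le \delta_D$ recorded at the end of \cref{Auxilary} immediately gives $\tr(K_O)\le T\delta_D$ uniformly in $\mathbf{X}_T$. Since both bounds are input-independent, the supremum over $\mathbf{X}_T$ is preserved and halving yields the claim. I do not expect a substantial obstacle here: the argument is largely assembly once the right splitting is fixed, and that splitting is precisely what motivates the construction of $\mathcal{P}_D$ in \cref{Auxilary}; the only step requiring care is the PSD trace monotonicity appearing in the backbone inequality.
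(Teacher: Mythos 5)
Your proposal is correct and follows essentially the same route as the paper: the same $K = K_{P,\Xb_t,\Xb_t}+K_{O,\Xb_t,\Xb_t}$ splitting, the Weinstein--Aronszajn identity plus an AM--GM trace bound for the projected part, and $\tr(K_{O,\Xb_t,\Xb_t})\le T\delta_D$ for the residual. The only (cosmetic) difference is that you package the residual via the symmetrized inequality $\log\det(\Ib+A+B)\le\log\det(\Ib+A)+\tr(B)$, while the paper factors the determinant and applies its $\log\det(P)\le n\log(\tr(P)/n)$ lemma followed by $\log(1+z)\le z$ --- the two arguments yield identical bounds.
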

The expression can be simplified as 
\begin{eqnarray}\label{simplified}
\gamma_T = \O\left(D\log( T) + \delta_DT \right).
\end{eqnarray}

In contrast to the existing results, Theorem \ref{TheoremDT} provides an upper bound in terms of the spectral properties of the GP kernel through $\delta_D$ that is applicable to all kernels based on their eigendecay. Specializing this bound for common kernels (Mat{\'e}rn and SE) results in tight upper bounds on $\gamma_T$ (up to a $\log(T)$ factor), and consequently improved bounds on regret, across various settings, compared to the existing ones.

\emph{Proof Sketch.}
Recall
$
I(\yb_t;\hat{f}) = \frac{1}{2}\log\det(\Ib_t+\frac{1}{\tau}K_{\Xb_t,\Xb_t}).
$
The problem is thus bounding the $\log\det$ of the covariance matrix $\Ib_t+\frac{1}{\tau}K_{\Xb_t,\Xb_t}$ for an arbitrary sequence $\Xb_t$ of observation points. 
To achieve this, 
we use the $D$-dimensional projection in the RKHS.
Recall $k=k_P+k_O$. Let us use the notations $K_{P,\Xb_t,\Xb_t}=[k_{P}(x_i,x_j)]_{i,j=1}^T$ and $K_{O,\Xb_t,\Xb_t}=[k_{O}(x_i,x_j)]_{i,j=1}^T$ to denote the corresponding covariance matrices. We show that $\log\det(\Ib_t+\frac{1}{\tau}K_{\Xb_t,\Xb_t})$ is bounded in terms of $\log\det(\Ib_t+\frac{1}{\tau}K_{P,\Xb_t,\Xb_t})$ and a residual term, depending on $K_{O,\Xb_t,\Xb_t}$. 
The finite dimensionality of the RKHS of $k_P$ allows us to use Weinstein–Aronszajn identity and the Gram matrix $G_t$ in the feature space of $k_P$ to bound $\log\det(\Ib_t+\frac{1}{\tau}K_{P,\Xb_t,\Xb_t})$ in terms of $\log\det(\Ib_D+\frac{1}{\tau}G_t)$. 
Elementary calculation can be used to establish a bound on the $\log\det$ of a positive definite matrix in terms of its trace. Utilizing this result, we bound $\log\det(\Ib_D+\frac{1}{\tau}G_t)$ by $\O(D\log(T))$. 
We use the bound on the $\log\det$ of a positive definite matrix in terms of its trace, again, to bound the residual term depending on $K_{O,\Xb_t,\Xb_t}$ by $\O(\delta_D T)$, taking advantage of the property that $k_{O}(x,x')\le \delta_D$.
A detailed proof is given in Appendix~\ref{AppendixA}.

\begin{remark}\label{remark1}
The quantity $\tilde{\mathcal{D}}_T=\sum_{t=1}^T\sigma_T^2(x_t)$ is often referred to as the effective dimension of the sequential optimization problem~\citep{Valko2013kernelbandit, Calandriello2019Adaptive, Janz2020SlightImprov}. To obtain an explicit bound on $\gamma_T$, we increase $D$ such that $D\log(T)$ and $T\delta_D$ on the right hand side of~\eqref{simplified} become of the same order. For such sufficiently large $D$, we have $\gamma_T=\O(D\log(T))$; consequently, $\tilde{\mathcal{D}}_T=\O(D\log(T))$, which explains the use of the term effective dimension. That is to say the behavior of the kernel
becomes similar to that of a finite $D$-dimensional kernel (up to a $\log(T)$ factor).
\end{remark}

\subsection{Conditions on the Eigendecay of the GP Kernel}


We now discuss the implications of Theorem~\ref{TheoremDT}, under conditions on the eigendecay of $k$. In particular, we define the following characteristic eigendecay profiles (which are similar to those outlined in~\cite{Bartlett2018}). 
\begin{definition}[Polynomial and Exponential Eigendecay]\label{Def:PolExp}
Consider the eigenvalues $\{\lambda_m\}_{m=1}^\infty$ of $k$ as given in Theorem~\ref{The:Mercer} in a decreasing order.

\begin{enumerate}
    \item For some $C_p>0$, $\beta_p> 1$, $k$ is said to have a $(C_p,\beta_p)$ polynomial eigendecay, if for all $m\in \Nn$, we have $\lambda_m\le C_pm^{-\beta_p}$.
    \item For some $C_{e,1},C_{e,2},\beta_e>0$, $k$ is said to have a $(C_{e,1},C_{e,2},\beta_e)$ exponential eigendecay, if for all $m\in \Nn$, we have $\lambda_m\le C_{e,1}\exp(-C_{e,2}m^{\beta_e})$.
\end{enumerate}
\end{definition}

The following corollary is a consequence of Theorem~\ref{TheoremDT}.

\begin{corollary}\label{Cor:PolExp}
Consider $\gamma_T$ defined in~\eqref{maximal}.
If $k$ has a $(C_p,\beta_p)$ polynomial eigendecay, we have
\begin{eqnarray}\nn
\gamma_T\le \left((\frac{C_p\psi^2T}{\tau})^{\frac{1}{\beta_p}}\log^{-\frac{1}{\beta_p}}(1+\frac{\bar{k}T}{\tau})+1\right)\log(1+\frac{\bar{k}T}{\tau}).
\end{eqnarray}
The expression can be simplified as $\gamma_T=\O\left(T^{\frac{1}{\beta_p}}\log^{1-\frac{1}{\beta_p}}(T)\right)$.

If $k$ has a $(C_{e,1},C_{e,2},\beta_e)$ exponential eigendecay, we have
\begin{eqnarray}\nn
\gamma_T \le \left(\left(\frac{2}{C_{e,2}}(\log(T)+C_{\beta_e})\right)^{\frac{1}{\beta_e}}+1\right)\log(1+\frac{\bar{k}T}{\tau}),
\end{eqnarray}
where $C_{\beta_e} = \log(\frac{C_{e,1}\psi^2}{\tau C_{e,2}})$ if $\beta_e=1$, and $C_{\beta_e} =    \log(\frac{2C_{e,1}\psi^2}{\tau \beta_eC_{e,2}})+(\frac{1}{\beta_e}-1)\left( \log(\frac{2}{C_{e,2}}(\frac{1}{\beta_e}-1))-1) \right)$, otherwise.
The expression can be simplified as
$\gamma_T=\O(\log^{1+\frac{1}{\beta_e}}(T))$.
\end{corollary}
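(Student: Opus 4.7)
The plan is to apply Theorem~\ref{TheoremDT}, whose bound
\[\gamma_T \le \tfrac{1}{2}D\log\!\left(1+\tfrac{\bar{k}T}{\tau D}\right) + \tfrac{1}{2}\tfrac{\delta_D T}{\tau}\]
is valid for every $D\in\mathbb{N}$, and then to choose $D$ (depending on $T$ and on the eigendecay profile) so that the two terms on the right-hand side are balanced. In each case the argument has two ingredients: bound the tail mass $\delta_D = \psi^2 \sum_{m>D} \lambda_m$ by a monotone integral, and then solve the resulting balance equation for $D$.

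For the polynomial case, monotonicity of $x \mapsto x^{-\beta_p}$ immediately gives $\delta_D \le C_p\psi^2 \int_D^\infty x^{-\beta_p}\,dx = O(D^{1-\beta_p})$. Equating $\delta_D T/\tau$ (up to constants) with $D\log(1+\bar{k}T/\tau)$ leads to the choice
\[D \;\asymp\; \left(\frac{C_p\psi^2 T}{\tau\log(1+\bar{k}T/\tau)}\right)^{\!1/\beta_p},\]
whose integer ceiling accounts for the additive $+1$ in the statement. Substitution back into Theorem~\ref{TheoremDT} then yields the explicit bound as claimed, and collapsing constants and bounded factors gives the simplified form $\gamma_T = O(T^{1/\beta_p}\log^{1-1/\beta_p}(T))$.

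For the exponential case the strategy is instead to pick $D$ just large enough that $\delta_D T/\tau$ is dominated by $\log(1+\bar{k}T/\tau)$, so that the first term in Theorem~\ref{TheoremDT} controls the bound and $\gamma_T = O(D\log T)$. Taking $D \asymp (C_{e,2}^{-1}(\log T + C_{\beta_e}))^{1/\beta_e}$ then delivers the simplified $\gamma_T = O(\log^{1+1/\beta_e}(T))$. When $\beta_e = 1$ the tail is a geometric series, $\delta_D \le \frac{C_{e,1}\psi^2}{1-e^{-C_{e,2}}}\,e^{-C_{e,2} D}$, and solving $\delta_D T/\tau \le \log(1+\bar{k}T/\tau)$ for $D$ produces the simple constant $C_{\beta_e} = \log(C_{e,1}\psi^2/(\tau C_{e,2}))$ stated in the corollary.

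The main obstacle is the case $\beta_e \ne 1$, which is precisely where the awkward-looking $(1/\beta_e - 1)(\log(2(1/\beta_e - 1)/C_{e,2}) - 1)$ term in $C_{\beta_e}$ arises. After the substitution $u = C_{e,2} x^{\beta_e}$ the tail integral takes the form of an incomplete gamma integral
\[\int_D^\infty \exp(-C_{e,2}x^{\beta_e})\,dx \;=\; \frac{1}{\beta_e C_{e,2}^{1/\beta_e}} \int_{C_{e,2}D^{\beta_e}}^{\infty} u^{1/\beta_e - 1} e^{-u}\,du,\]
which has no elementary closed form. I would bound it by the standard device of writing $u^{1/\beta_e - 1} e^{-u} = u^{1/\beta_e - 1} e^{-u/2} \cdot e^{-u/2}$ and replacing the first factor by its supremum over the tail; the appearance of $2(1/\beta_e - 1)/C_{e,2}$ inside the logarithm in $C_{\beta_e}$ is exactly the footprint of this split, the factor $2$ coming from integrating $e^{-u/2}$ and the $1/\beta_e - 1$ from the location of the maximiser of $u^{1/\beta_e - 1} e^{-u/2}$. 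Inverting the resulting estimate to find the smallest $D$ for which $\delta_D T/\tau \le \log(1+\bar{k}T/\tau)$ yields the stated $C_{\beta_e}$, and plugging this $D$ back into Theorem~\ref{TheoremDT} completes the proof.
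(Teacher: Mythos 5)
Your argument is correct and is essentially the paper's own proof: the same integral comparison bounding $\delta_D$, the same choice of $D$ balancing $T\delta_D/\tau$ against $D\log(1+\bar{k}T/\tau)$ (with the ceiling accounting for the $+1$), and the same $e^{-u/2}\cdot e^{-u/2}$ split with the supremum of $u^{1/\beta_e-1}e^{-u/2}$ producing the $(\tfrac{2}{C_{e,2}}(\tfrac{1}{\beta_e}-1))$ term when $\beta_e\neq 1$. The one detail that would not reproduce the stated constant is your geometric-series bound for $\beta_e=1$, which yields $C_{\beta_e}=\log\bigl(C_{e,1}\psi^2/(\tau(1-e^{-C_{e,2}}))\bigr)$ rather than $\log\bigl(C_{e,1}\psi^2/(\tau C_{e,2})\bigr)$; using the integral comparison $\int_D^\infty e^{-C_{e,2}z}\,dz=C_{e,2}^{-1}e^{-C_{e,2}D}$, exactly as you do in the polynomial case, recovers the constant as stated.
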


Corollary~\ref{Cor:PolExp} gives general bounds on $\gamma_T$ providing the polynomial and exponential conditions on the eigendecay of $k$ are satisfied.
A detailed proof is provided in Appendix~\ref{AppendixB}. 

\begin{remark}
It is known that, in the case of a Mat{\'e}rn kernel with smoothness parameter $\nu>\frac{1}{2}$, $\lambda_m  =O(m^{-\frac{2\nu+d}{d}})$~\citep{MaternEigenvaluessantin2016}; and, in the case of SE kernel, $\lambda_m  = O(\exp(-m^{\frac{1}{d}}))$~\citep{SEEigenvalues}. Also, see~\citet{Gabriel2020practicalfeature} which gave closed form expression of their eigenvalue-eigenfeature pairs on hypercubes. Thus, as special cases of polynomial and exponential eigendecays, we have
\begin{eqnarray}\nn
\gamma_T &=& {O}\left(T^{\frac{d}{2\nu+d}}\log^{\frac{2\nu}{2\nu+d}}(T)\right),~~\text{for Mat{\'e}rn-$\nu$ kernel},\\\nn
\gamma_T &=& {O}\left(\log^{d+1}(T)\right),~~\text{for SE kernel},
\end{eqnarray}
which are tight up to $\log(T)$ factors, based on the lower bounds reported in~\cite{Scarlett2017Lower}.
\end{remark}

\section{The Improved Regret Bounds for Bayesian Optimization Algorithms}\label{Sec:RegBounds}

Utilizing the upper bounds on $\gamma_T$  established in Theorem~\ref{TheoremDT} and Corollary~\ref{Cor:PolExp}, we can derive new regret bounds for Bayesian optimization algorithms. In particular, consider the sequential optimization problem given in~\cref{SeqOptP}. Under the Bayesian setting, the application of our bounds on $\gamma_T$ to $\O(\sqrt{\log(T)\gamma_T T})$ regret bounds for GP-UCB and GP-TS established in~\cite{srinivas2010gaussian, kandasamy2018parallelised} results in $\O\left(T^{\frac{\beta_p+1}{2\beta_p}}\log^{1-\frac{1}{2\beta_p}}(T)\right)$ and $\O\left(T^{\frac{1}{2}}
\log^{1+\frac{1}{2\beta_e}}(T)
\right)$ regret bounds under the polynomial and exponential eigendecays, respectively. 
As special cases of polynomial and exponential eigendecays, the regret bound for both GP-UCB and GP-TS with Mat{\'e}rn and SE kernels is ${\O}\left(
T^{\frac{\nu+d}{2\nu+d}}\log^{\frac{4\nu+d}{4\nu+2d}}(T)
\right)$ and $\O\left(T^{\frac{1}{2}}(\log(T))^{\frac{d}{2}+1}\right)$, respectively.

Under the frequentist setting, the application of our bounds on $\gamma_T$ to $\O(\log^2(T)\sqrt{\gamma_T T})$ regret bounds for SupKernelUCB reported in~\cite{Valko2013kernelbandit}, results in $\O\left(T^{\frac{\beta_p+1}{2\beta_p}}\log^{\frac{5}{2}-\frac{1}{2\beta_p}}(T)\right)$ and $\O\left(T^{\frac{1}{2}}
\log^{\frac{5}{2}+\frac{1}{2\beta_e}}(T)
\right)$ regret bounds under the polynomial and exponential eigendecays, respectively. Consequently, the regret bounds for SupKernelUCB with Mat{\'e}rn and SE kernels are ${\O}\left(
T^{\frac{\nu+d}{2\nu+d}}\log^{\frac{5}{2}-\frac{d}{4\nu+2d}}(T)
\right)$ and $\O\left(T^{\frac{1}{2}}\log^{\frac{d}{2}+\frac{5}{2}}(T)\right)$, respectively, reducing the gap with the lower bounds reported in~\cite{Scarlett2017Lower} to a $\log^{\frac{5}{2}}(T)$ factor. Furthermore, under the frequentist setting, our bounds on $\gamma_T$ improve the $\tilde\O(\gamma_T\sqrt{ T})$ regret bounds for vanilla GP-UCB and GP-TS reported in~\cite{Chowdhury2017bandit}.

\section{ Conclusion}\label{Sec:Discussion}

We introduced a general approach to bounding the information gain in Bayesian optimization problems. 
We provided explicit bounds in terms of $T$ on $\gamma_T$ and regret, under conditions on the eigendecay of the kernel, which directly apply to common kernels such as Mat{\'e}rn and SE and show significant improvements over the state of the art. 
Our results establish the first tight regret bounds (up to $\log(T)$ factors) with the Mat{\'e}rn kernel under the frequentist setting, which shows our bound on $\gamma_T$ is tight (up to $\log(T)$ factors). 
The application of our bounds on $\gamma_T$ to numerous other settings, where the regret bounds are given in terms of $\gamma_T$, as listed in~\cref{RelatedW}, improves the best known regret bounds.

\section*{Acknowledgment}

We thank Jonathan Scarlett for insightful comments on an earlier version of this paper.

\vspace{1em}

\medskip 
\bibliography{references.bib}
\bibliographystyle{abbrvnat}

\begin{appendices}
\section{(Proof of Theorem~\ref{TheoremDT})}\label{AppendixA}

We bound $ I(\yb_t;\hat{f})=\frac{1}{2}\log\det(\Ib_t+\frac{1}{\tau}K_{\Xb_t,\Xb_t})$ for an arbitrary observation sequence $\Xb_t$. Recall $k_P$ and $k_O$ and the respective covariance matrices $K_{P,\Xb_t,\Xb_t}=[k_{P}(x_i,x_j)]_{i,j=1}^T$ and $K_{O,\Xb_t,\Xb_t}=[k_{O}(x_i,x_j)]_{i,j=1}^T$, where $k_P$ corresponds to the $D$-dimensional projection in the RKHS of $k$, and $k_O$ corresponds to the orthogonal element. Noticing $K_{\Xb_t,\Xb_t} = K_{P,\Xb_t,\Xb_t}+K_{O,\Xb_t,\Xb_t}$, we have
\begin{eqnarray}\nn
I(\yb_t;\hat{f}) &=& \frac{1}{2}\log\det (\Ib_t + \frac{1}{\tau}K_{\Xb_t,\Xb_t})\\\nn 
&=& \frac{1}{2}\log\det \left(\Ib_t + \frac{1}{\tau}(K_{P,\Xb_t,\Xb_t}+K_{O,\Xb_t,\Xb_t})\right)\\\nn 
&=&
\frac{1}{2}\log\det \left((\Ib_t + \frac{1}{\tau}K_{P,\Xb_t,\Xb_t})(\Ib_t + \frac{1}{\tau}(\Ib_t + \frac{1}{\tau}K_{P,\Xb_t,\Xb_t})^{-1}K_{O,\Xb_t,\Xb_t})\right)\\\label{twoS}
&&\hspace{-3em}=\frac{1}{2}\log\det(\Ib_t + \frac{1}{\tau}K_{P,\Xb_t,\Xb_t}) + \frac{1}{2}\log\det\left(\Ib_t + \frac{1}{\tau}(\Ib_t + \frac{1}{\tau}K_{P,\Xb_t,\Xb_t})^{-1}K_{O,\Xb_t,\Xb_t}\right),~~~~~~~~
\end{eqnarray}
where for the last line we used $\det(AB) = \det(A)\det(B)$ which holds for all two square matrices of the same dimensions. The equation~\eqref{twoS} decouples the $\log\det$ of the covariance matrix corresponding to $k$ into that of $k_P$ and a residual term depending on $k_O$. We now proceed to bounding the two terms on the right hand side of~\eqref{twoS}. 

We can upper bound the first term on the right hand side of~\eqref{twoS} using a bound on the $\log\det$ of the Gram matrix in the $D$-dimensional feature space of $k_P$. Let us define $\Phib_{t,D} = [\phib_D(x_1), \phib_D(x_2), \dots,\phib_D(x_t) ]^{\TP}$, a $t\times D$ matrix which stacks the feature vectors $\phib^{\TP}_D(x_s)$, $s=1,\dots,t$, at the observation points, as its rows. 
Notice that 
\begin{eqnarray}\nn
K_{P,\Xb_t,\Xb_t} = \Phib_{t,D}\Lambda_D\Phib_{t,D}^{\TP}.
\end{eqnarray}

Consider the Gram matrix 
\begin{eqnarray}\nn
G_t = \Lambda_D^{\frac{1}{2}}\Phib_{t,D}^{\TP}\Phib_{t,D}\Lambda_D^{\frac{1}{2}}.
\end{eqnarray}
By Weinstein–Aronszajn identity\footnote{That is a special case of matrix determinant lemma.}~\citep{pozrikidis2014introduction}
\begin{eqnarray}\label{WeinAron}
\det(\Ib_D +\frac{1}{\tau} G_t) = \det(\Ib_t + \frac{1}{\tau}K_{P,\Xb_t,\Xb_t}).
\end{eqnarray}

We can prove the following lemma on the relation between the $\log\det$ and the trace of a positive definite matrix. 
\begin{lemma}\label{logdet}
For all positive definite matrices $P\in \Rr^{n\times n}$, we have
\begin{eqnarray}\nn
\log\det(P)\le n\log( \tr (P)/n).
\end{eqnarray}
\end{lemma}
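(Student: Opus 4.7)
The plan is to reduce the inequality to a one-variable statement about the eigenvalues of $P$ and then invoke the concavity of $\log$ (equivalently, the AM-GM inequality). Since $P$ is positive definite (and, in this context, symmetric), the spectral theorem gives real positive eigenvalues $\lambda_1,\dots,\lambda_n$, and both sides of the claim are invariant under orthogonal similarity transformations.

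First I would write $\log\det(P)=\sum_{i=1}^n \log\lambda_i$ and $\tr(P)=\sum_{i=1}^n \lambda_i$. The target inequality then becomes
\begin{equation*}
\frac{1}{n}\sum_{i=1}^n \log\lambda_i \;\le\; \log\!\left(\frac{1}{n}\sum_{i=1}^n \lambda_i\right),
\end{equation*}
which is exactly Jensen's inequality for the concave function $x\mapsto \log x$ on $(0,\infty)$ applied to the uniform average over $\{\lambda_1,\dots,\lambda_n\}$. Multiplying both sides by $n$ delivers the claim.

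As an equivalent and perhaps more elementary route, I could instead invoke AM-GM: $\bigl(\prod_{i=1}^n \lambda_i\bigr)^{1/n} \le \frac{1}{n}\sum_{i=1}^n \lambda_i$, raise both sides to the $n$-th power to obtain $\det(P)\le (\tr(P)/n)^n$, and take logarithms. The positivity of the $\lambda_i$'s (guaranteed by positive definiteness) is what makes all of these steps valid and ensures the logarithms are real.

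There is no substantive obstacle here: the only point to be mindful of is that the lemma is stated for positive definite $P$ (not merely positive semidefinite), so $\lambda_i>0$ for all $i$ and none of the logarithms blow up. Equality in Jensen/AM-GM holds when all $\lambda_i$ coincide, i.e.\ when $P$ is a positive multiple of $\Ib_n$, which is the natural tightness case and a useful sanity check for how the lemma is later applied in bounding $\log\det(\Ib_D+\tfrac{1}{\tau}G_t)$ by $D\log\!\bigl(1+\tfrac{1}{\tau D}\tr(G_t)\bigr)$ in the main proof.
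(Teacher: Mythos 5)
Your proof is correct and matches the paper's argument: the paper diagonalizes $P$ and applies the AM--GM inequality to the eigenvalues before taking logarithms, which is exactly your second route (and your Jensen's-inequality phrasing is just the logarithmic form of the same step). No gaps.
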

The proof is provided at the end of this section. 

We next bound the trace of $\Ib_D+\frac{1}{\tau}G_t$.
Notice that, for all $x\in \X$,
\begin{eqnarray}\nn
\|\phib_D(x)\Lambda^{\frac{1}{2}}_D\|^2_2 &=& \sum_{m=1}^D\lambda_m\phi^2_m(x)\\\nn
&=&k_{P}(x,x)\\\nn
&\le&\bar{k}.
\end{eqnarray}
Thus,
\begin{eqnarray}\nn
\tr(\Ib_D+\frac{1}{\tau} G_t) &=& D+\frac{1}{\tau}\tr\left(\sum_{s=1}^t\Lambda^{\frac{1}{2}}\phib_D(x_s)\phib^{\TP}_D(x_s)\Lambda^{\frac{1}{2}}\right)\\\nn
&=& D + \frac{1}{\tau}\sum_{s=1}^t\tr\left(\Lambda^{\frac{1}{2}}\phib_D(x_s)\phib^{\TP}_D(x_s)\Lambda^{\frac{1}{2}}\right)\\\nn
&=&D+  \frac{1}{\tau}\sum_{s=1}^t\tr\left(
\Lambda^{\frac{1}{2}}\phib^{\TP}_D(x_s)\phib_D(x_s)\Lambda^{\frac{1}{2}}
\right)\\\nn
&=&D+  \frac{1}{\tau}\sum_{s=1}^t
{\|\phib_D(x_s)\Lambda^{\frac{1}{2}}\|_2^2}
\\\nn
&\le& D+\frac{t\bar{k}}{\tau}.
\end{eqnarray}
For the first line we expanded the Gram matrix, the second line holds by distributivity of trace over sum, and the third line is a result of $\tr(AA^{\TP}) = \tr(A^{\TP}A)$ which holds for any matrix $A$.

Using Lemma~\ref{logdet} and~\eqref{WeinAron}, we have
\begin{eqnarray}\nn
\log\det(\Ib_t + \frac{1}{\tau}K_{P,\Xb_t,\Xb_t})
&=& \log\det(\Ib_D + \frac{1}{\tau}G_t)\\\nn
&\le& D\log\left(\frac{\tr(\Ib_D+\frac{1}{\tau}G_t)}{D}  \right)\\\label{twoS1}
&=&D\log(1+\frac{\bar{k}t}{\tau D}).
\end{eqnarray}

To upper bound the second term on the right hand side of~\eqref{twoS}, we use $k_{O}(x,x')\le \delta_D$. Notice that $(\Ib_t + \frac{1}{\tau}K_{P,\Xb_t,\Xb_t})^{-1}$ is a positive definite matrix whose largest eigenvalue is upper bounded by $1$. For two positive definite matrices $P_1,P_2$ with the same dimensions, we have $\tr(P_1P_2)\le \bar{\lambda}_{P_1} \tr(P_2)$ where $\bar{\lambda}_{P_1}$ is the largest eigenvalue of $P_1$ (cf.~\cite{fang1994inequalities}). Thus \begin{eqnarray}\nn
\tr\left((\Ib_t + \frac{1}{\tau}K_{P,\Xb_t,\Xb_t})^{-1}K_{O,\Xb_t,\Xb_t}\right)\le\tr(K_{O,\Xb_t,\Xb_t}).
\end{eqnarray}
Since $\forall x,x'\in \X,~k_O(x,x')\le \delta_D$, we have $\tr(K_{O,\Xb_t,\Xb_t})\le t\delta_D$. Therefore,
\begin{eqnarray}\nn
\tr\left(\Ib_t + \frac{1}{\tau}(\Ib_t + \frac{1}{\tau}K_{P,\Xb_t,\Xb_t})^{-1}K_{O,\Xb_t,\Xb_t}\right) \le t(1+\frac{1}{\tau}\delta_D).
\end{eqnarray}

Using Lemma~\ref{logdet}, we have 
\begin{eqnarray}\nn
\log\det\left(\Ib_t + \frac{1}{\tau}(\Ib_t + \frac{1}{\tau}K_{P,\Xb_t,\Xb_t})^{-1}K_{O,\Xb_t,\Xb_t}\right) &\le& t\log\left( \frac{t(1+\frac{1}{\tau}\delta_D)}{t}\right)\\\nn
&=&t\log(1+\frac{1}{\tau}\delta_D)\\\label{twoS2}
&\le&\frac{t\delta_D}{\tau},
\end{eqnarray}
where for the last line we used $\log(1+z)\le z$ which holds for all $z\in \Rr$.

Putting~\eqref{twoS},~\eqref{twoS1} and~\eqref{twoS2} together, we arrive at the following bound on the information gain. 
\begin{eqnarray}\nn
I(\yb_t;\hat{f}
) &\le& \frac{1}{2}D\log(1+\frac{\bar{k}t}{\tau D}) +  \frac{1}{2}\frac{t\delta_D}{\tau},
\end{eqnarray}
which holds for any arbitrary sequence $\Xb_t\subseteq \X$. Thus
\begin{eqnarray}\nn
\gamma_T &=& \sup_{\Xb_T\subseteq \X}I(\yb_T;\hat{f}
) \\\nn
&\le& \frac{1}{2}D\log(1+\frac{\bar{k}T}{\tau D}) +  \frac{1}{2}\frac{T\delta_D}{\tau}.
\end{eqnarray}

\begin{proof}[Proof of Lemma~\ref{logdet}]
Let $\{\kappa_m>0\}_{m=1}^n$ denote the eigenvalues of $P$. Using the inequality of arithmetic and geometric means
\begin{eqnarray}\nn
\prod_{m=1}^n\kappa_m \le (\frac{1}{n}\sum_{m=1}^n\kappa_m)^n.
\end{eqnarray}
Thus, 
\begin{eqnarray}\nn
\log\det(P) &=& \log\left(\prod_{m=1}^n\kappa_m
\right)\\\nn
&\le& \log\left((\frac{1}{n}\sum_{m=1}^n\kappa_m)^n
\right)\\\nn
&=&\log\left((\frac{\tr(P)}{n})^n
\right)\\\nn
&=&n\log\left(\frac{\tr(P)}{n}
\right).
\end{eqnarray}

\end{proof}
\section{(Proof of Corollary~\ref{Cor:PolExp})}\label{AppendixB}

Under the $(C_p,\beta_p)$ polynomial eigendecay condition, the following bound on $\delta_D$ is straightforwardly derived from the decay rate of $\lambda_m$. 
\begin{eqnarray}\nn
\delta_D &=& \sum_{m=D+1}^\infty \lambda_m\psi^2\\\nn
&\le& \sum_{m=D+1}^\infty C_pm^{-\beta_p}\psi^2\\\nn
&\le& \int_{z=D}^\infty  C_pz^{-\beta_p}\psi^2dz\\\nn
&=& C_pD^{1-\beta_p}\psi^2.
\end{eqnarray}

We select $D = \lceil(C_p\psi^2T)^{\frac{1}{\beta_p}}\tau^{-\frac{1}{\beta_p}}\log^{-\frac{1}{\beta_p}}(1+\frac{\bar{k}T}{\tau})\rceil$ which is the smallest $D$ ensuring $\frac{T\delta_D}{\tau}\le D\log(1+\frac{\bar{k}T}{\tau}) $; thus, resulting in the lowest growth rate of $\gamma_T$ based on Theorem~3, which implies $\gamma_T\le D\log(1+\frac{\bar{k}T}{\tau})$
\begin{eqnarray}\nn
\gamma_T\le \left((C_p\psi^2T)^{\frac{1}{\beta_p}}\tau^{-\frac{1}{\beta_p}}\log^{-\frac{1}{\beta_p}}(1+\frac{\bar{k}T}{\tau})+1\right)\log(1+\frac{\bar{k}T}{\tau}).
\end{eqnarray}

Under the $(C_{e,1},C_{e,2},\beta_e)$ exponential eigendecay condition,
\begin{eqnarray}\nn
\delta_D  &=& \sum_{m=D+1}^\infty \lambda_m\psi^2\\\nn
&\le& \sum_{m=D+1}^\infty C_{e,1}\exp(-C_{e,2}m^{\beta_{e}})\psi^2\\\nn
&\le& \int_{z=D}^\infty C_{e,1}\exp(-C_{e,2}z^{\beta_e})\psi^2dz.
\end{eqnarray}
Now, consider two different cases of $\beta_e=1$ and $\beta_e\neq 1$. When $\beta_e=1$,
\begin{eqnarray}\nn
\int_{z=D}^\infty \exp(-C_{e,2}z^{\beta_e})dz
&=&\int_{z=D}^\infty \exp(-C_{e,2}z)dz\\\nn
&=&\frac{1}{C_{e,2}}\exp(-C_{e,2}D).
\end{eqnarray}
 
When $\beta_e\neq1$, we have

\begin{eqnarray}\nn
\int_{z=D}^\infty \exp(-C_{e,2}z^{\beta_e})dz &=& \frac{1}{{\beta_e}}\int_{z=D^{{\beta_e}}}^\infty z^{\frac{1}{{\beta_e}}-1}\exp(-C_{e,2}z)dz\\\nn
&=& \frac{1}{{\beta_e}}\int_{z=D^{{\beta_e}}}^\infty z^{\frac{1}{{\beta_e}}-1}\exp(-C_{e,2}\frac{z}{2})\exp(-C_{e,2}\frac{z}{2})dz\\\nn
&\le&\frac{1}{{\beta_e}}\int_{z=D^{{\beta_e}}}^\infty (\frac{2}{C_{e,2}}(\frac{1}{{\beta_e}}-1))^{\frac{1}{{\beta_e}}-1}\exp(-(\frac{1}{{\beta_e}}-1))\exp(-C_{e,2}\frac{z}{2})dz\\\nn
&=& \frac{2}{C_{e,2}{\beta_e}}(\frac{2}{C_{e,2}}(\frac{1}{{\beta_e}}-1))^{\frac{1}{{\beta_e}}-1}\exp(-(\frac{1}{{\beta_e}}-1))\exp(-C_{e,2}\frac{D^{{\beta_e}}}{2}).
\end{eqnarray}

The first equality is obtained by a change of parameter. 
The inequality holds since
\begin{eqnarray}
\max_{z\in\Rr} z^{\frac{1}{{\beta_e}}-1}\exp(-C_{e,2}\frac{z}{2})= (\frac{2}{C_{e,2}}(\frac{1}{{\beta_e}}-1))^{\frac{1}{{\beta_e}}-1}\exp(-(\frac{1}{{\beta_e}}-1))
\end{eqnarray}
which can be verified using the standard method of equating the derivative of the left hand side to zero.  

With a similar logic to the polynomial eigendecay case,  
when $\beta_e=1$, we select 
\begin{eqnarray}\nn
D = \lceil \frac{1}{C_{e,2}}\log(\frac{C_{e,1}\psi^2T}{\tau C_{e,2}}) \rceil.
\end{eqnarray}

When $\beta_e\neq 1$, we select
\begin{eqnarray}\nn
D = \left\lceil\left(\frac{2}{C_{e,2}}
\left(
\log(T) + \log(\frac{2C_{e,1}\psi^2}{\tau\beta_eC_{e,2}})+(\frac{1}{\beta_e}-1)\left( \log(\frac{2}{C_{e,2}}(\frac{1}{\beta_e}-1))-1) \right)
\right)
\right)^{\frac{1}{\beta_e}}
\right\rceil.
\end{eqnarray}
Theorem~3 implies
\begin{eqnarray}\nn
\gamma_T \le \left(\left(\frac{2}{C_{e,2}}(\log(T)+C_{\beta_e})\right)^{\frac{1}{\beta_e}}+1\right)\log(1+\frac{\bar{k}T}{\tau}),
\end{eqnarray}
$C_{\beta_e} = \log(\frac{C_{e,1}\psi^2}{\tau C_{e,2}})$ when $\beta_e=1$, and $C_{\beta_e} =    \log(\frac{2C_{e,1}\psi^2}{\tau\beta_eC_{e,2}})+(\frac{1}{\beta_e}-1)\left( \log(\frac{2}{C_{e,2}}(\frac{1}{\beta_e}-1))-1) \right)$, otherwise.
\end{appendices}

\end{document}